\documentclass[sigconf,screen,nonacm]{acmart}
\usepackage{amsfonts,amsmath}
\usepackage{xeCJK}
\usepackage{booktabs}
\usepackage{makecell}
\usepackage{float}
\usepackage{placeins}
\usepackage{subcaption}
\usepackage{enumerate}
\usepackage{enumitem}
\usepackage{url}
\usepackage{newfloat}
\usepackage{listings}
\usepackage{multirow}
\usepackage{subcaption}
\usepackage{tabularx}
\newcolumntype{C}{>{\centering\arraybackslash}X}
\usepackage[capitalize,noabbrev]{cleveref}
\usepackage[ruled,linesnumbered]{algorithm2e}
\usepackage[breakable,skins]{tcolorbox}
\usepackage{float}
\newtcolorbox{promptbox}[1][]{%
  enhanced,
  colback=blue!3, colframe=blue!25,
  fonttitle=\small\bfseries, title={#1},
  left=4pt, right=4pt, top=4pt, bottom=4pt,
  boxrule=0.5pt,
}
\makeatletter
\newcommand{\removelatexerror}{\let\@latex@error\@gobbletwo}
\makeatother

\DeclareCaptionStyle{ruled}{labelfont=normalfont,labelsep=colon,strut=off} 
\floatstyle{ruled}
\newfloat{listing}{tb}{lst}{}
\floatname{listing}{Listing}

\AtBeginDocument{%
  \providecommand\BibTeX{{%
    \normalfont B\kern-0.5em{\scshape i\kern-0.25em b}\kern-0.8em\TeX}}}

\setcopyright{acmlicensed}
\copyrightyear{2026}
\acmYear{2026}
\acmDOI{XXXXXXX.XXXXXXX}
\acmConference[Conference acronym 'XX]{Make sure to enter the correct
  conference title from your rights confirmation email}{June 03--05,
  2018}{Woodstock, NY}
\acmISBN{978-1-4503-XXXX-X/18/06}

\begin{document}

\title{AIGB-R1: Self-Evolving Generative Auto-Bidding via Hierarchical Planner-Executor Optimization}

\author{Yuejia Dou}
\authornote{Equal contribution}
\email{douyuejia@ruc.edu.cn}
\affiliation{%
  \institution{\mbox{Gaoling School of Artificial Intelligence,} Renmin University of China}
    \city{Beijing}
  \country{China}}

\author{Hesong Wang}
\authornotemark[1]
\email{whs040114@gmail.com}
\affiliation{%
  \institution{\mbox{Gaoling School of Artificial Intelligence,} Renmin University of China}
    \city{Beijing}
  \country{China}}

\author{Xinyu Zhang}
\email{zxy479640@alibaba-inc.com}
\affiliation{%
  \institution{Alibaba Group}
    \city{Beijing}
  \country{China}}

\author{Tianyu Wang}
\email{yves.wty@alibaba-inc.com}
\affiliation{%
  \institution{Alibaba Group}
    \city{Beijing}
  \country{China}}

\author{Zhilin Zhang}
\email{zhangzhilin.pt@alibaba-inc.com}
\affiliation{%
  \institution{Alibaba Group}
    \city{Beijing}
  \country{China}}

\author{Chuan Yu}
\email{yuchuan.yc@alibaba-inc.com}
\affiliation{%
  \institution{Alibaba Group}
    \city{Beijing}
  \country{China}}

\author{Jian Xu}
\email{xiyu.xj@alibaba-inc.com}
\affiliation{%
  \institution{Alibaba Group}
    \city{Beijing}
  \country{China}}  

\author{Bo Zheng}
\email{bozheng@alibaba-inc.com}
\affiliation{%
  \institution{Alibaba Group}
    \city{Beijing}
  \country{China}} 
  
\author{Qi Qi}
\email{qi.qi@ruc.edu.cn}
\authornote{Corresponding author}
\affiliation{%
  \institution{\mbox{Gaoling School of Artificial Intelligence,} Renmin University of China}
    \city{Beijing}
  \country{China}}
  


\begin{abstract}
Auto-bidding plays an essential role in online advertising, automatically adjusting bids for advertisers to optimize their commercial goals. The emerging AI-Generated Bidding (AIGB) paradigm widely adopts generative modeling to optimize bidding strategies, yet suffers from the limited mode coverage of offline datasets and inadequate task-state understanding, hindering effective exploration of optimal strategies. Large Language Models (LLMs), with prior world knowledge and reasoning capabilities, offer a promising approach to overcome these limitations. However, directly applying LLMs to auto-bidding tasks faces inherent challenges in limited numerical precision, hallucinations, and inference latency. To address these limitations, we propose \textbf{AIGB-R1}, a hierarchical self-evolving auto-bidding framework aiming to enhance \textbf{AI}-\textbf{G}enerated \textbf{B}idding via LLMs' \textbf{R}easoning capabilities, comprising a high-level Planner module for macro-level strategy planning and a low-level Executor module for fine-grained decision-making.
Building upon this, we design an experience-driven self-evolving loop, enabling autonomous strategy exploration and optimization from accumulated experience. We adopt a two-stage pipeline of offline pre-training and post-training alignment, and build an interactive bidding simulation environment for strategy rollout. Furthermore, we propose Decoupled Group Relative Policy Optimization (D-GRPO) to achieve end-to-end optimization via advantage decoupling. 
Experimental results on a large-scale public dataset demonstrate the effectiveness of AIGB-R1.
\end{abstract}

\begin{CCSXML}
<ccs2012>
   <concept>
       <concept_id>10002951.10003227.10003447</concept_id>
       <concept_desc>Information systems~Computational advertising</concept_desc>
       <concept_significance>500</concept_significance>
       </concept>
 </ccs2012>
\end{CCSXML}

\ccsdesc[500]{Information systems~Computational advertising}

\keywords{Generative Model, Large Language Model, Auto-bidding, Preference Alignment}


\maketitle

\section{Introduction}
\label{sec:intro}
With the rapid digitalization of commerce, online advertising platforms have continually expanded their traffic coverage, becoming critical channels for businesses to attract target audiences and boost sales \cite{evans2009online,ha2008online,wang2015real}. Given billions of impressions and intense competition, the traditional paradigm of manual bid adjustment has become impractical. To alleviate this burden, major advertising platforms provide auto-bidding services that automatically determine bids for each impression to optimize advertisers' commercial goals \cite{balseiro2021robust,deng2021towards,ou2023deep}.
In the emerging AI-Generated Bidding (AIGB) paradigm, auto-bidding can be formulated as a sequential decision-making task, where the objective is to generate optimal bidding parameters that maximize cumulative conversions subject to constraints such as cost-per-action (CPA) and budget, based on historical bidding sequences and current market states \cite{guo2024generative,he2021unified}. Within this paradigm, generative auto-bidding methods represented by Decision Transformer (DT) \cite{chen2021decision,zheng2022online} have attracted widespread attention \cite{gao2025generative,li2025gas,dou2026gam}. These methods employ generative models to learn optimal strategies from historical bidding trajectories collected from online advertising platforms, autoregressively generating bidding parameters conditioned on return-to-go, state, and action sequences.

However, existing generative auto-bidding methods face critical challenges. First, their performance is restricted by the quality and coverage of the offline dataset, leading to degraded decision quality when encountering out-of-distribution states \cite{li2026lbm,li2025gas}. Second, existing generative auto-bidding methods simply condition policies on return-to-go without fully understanding the true value of actions and specific task states, making it difficult to plan optimal strategies \cite{ajay2022conditional,gao2025generative}. Third, existing generative methods are typically trained under fixed preference distributions and struggle to adapt to diverse advertiser preferences in real-world bidding services, while retraining for different preferences incurs substantial training costs \cite{huang2025generative,song2026dara,dou2026gam}.
These challenges limit the further development and widespread adoption of generative auto-bidding services \cite{wang2017display}, but the rapid advancement of Large Language Models (LLMs) in recent years offers new insights into how to overcome these bottlenecks \cite{bai2023qwen,yang2025qwen3,guo2025deepseek}. With prior world knowledge and reasoning capabilities, LLMs are expected to transcend the limitations of offline dataset coverage and adapt to diverse bidding strategy preferences through semantic understanding and multi-task generalization.
However, directly applying LLMs to auto-bidding tasks faces inherent limitations. First, auto-bidding requires precise control over continuous numerical action spaces when generating bidding parameters, yet LLMs suffer from inherent limitations in numerical decision-making, including insufficient precision and hallucinations \cite{forootani2025survey,huang2025survey,lv2026decisionllm}. 
Moreover, online advertising platforms require auto-bidding agents to process numerous bid requests under tight latency constraints, while LLMs face a fundamental trade-off between decision-making quality and inference latency: larger LLMs possess strong decision-making and planning capabilities but struggle to meet the low-latency requirements, whereas significantly reducing model size alleviates latency issues but inevitably leads to substantial degradation in decision-making and planning capabilities
\cite{cai2025rtbagent,huang2025survey,xiao2025densing}.

Several works have explored integrating LLM techniques into auto-bidding systems from different perspectives. 
One straightforward approach uses an LLM as the decision-maker, leveraging reasoning capabilities with memory and reflection mechanisms to output bidding decisions, but is directly constrained by the aforementioned numerical precision, hallucination, and latency issues \cite{cai2025rtbagent}.
Other methods optimize generative policy models through semantic representation embedding, or through post-training policy search techniques, but fundamentally fail
to fully exploit the decision-making and planning potential of LLMs \cite{zhu2026role,dou2026gam,li2025gas}. Recently, LBM \cite{li2026lbm} designed a hierarchical framework with separate modules dedicated to high-level reasoning and low-level decision-making respectively, effectively retaining LLM reasoning and planning capabilities while circumventing inherent bottlenecks, representing a promising architectural direction for LLM-empowered auto-bidding \cite{wang2023hibid,song2026dara,wan2025think,ahn2022can}.

However, the above methods still face the following challenges when learning optimal auto-bidding strategies. 
First, existing methods typically rely on offline pre-trained value functions to evaluate strategy quality, but these value functions often suffer from extrapolation 
errors on out-of-distribution strategies, struggling to guide exploration toward potentially optimal regions \cite{fujimoto2019off,kumar2020conservative,
kumar2019stabilizing}.
Second, existing methods' post-training search and alignment typically rely on repeated random sampling, which has limited exploration efficiency in auto-bidding tasks with
high-dimensional continuous strategy spaces, generating massive redundant trajectories while failing to effectively reuse historical rollout experience for self-improvement
\cite{guo2024generative,gao2025generative,li2024trajectory}.
Finally, existing methods either perform training-free test-time search or only optimize the high-level reasoning module while freezing the low-level execution module, thereby limiting convergence to globally optimal strategies compared with end-to-end optimization.

To overcome these challenges, we propose \textbf{AIGB-R1}, a hierarchical self-evolving auto-bidding framework designed to enhance \textbf{AI}-\textbf{G}enerated \textbf{B}idding via LLMs' \textbf{R}easoning capabilities, achieving autonomous exploration and evolution of bidding strategies through an experience-driven self-evolving loop.
AIGB-R1 temporally decouples the auto-bidding task into two stages: macro-level strategy planning and fine-grained bidding decision-making, based on which we construct a high-level Planner module and a low-level Executor module, respectively. 
Specifically, the Planner module employs a large LLM to reason about macro-level strategies for each bidding period and generate structured semantic strategy prompts. The Executor module employs Prompt Decision Transformer (PDT) to generate bidding parameters at each timestep within the bidding period, conditioned on both observed numerical sequences and the Planner's strategy prompts. Building upon this, we introduce an experience-driven self-evolving loop that enables AIGB-R1 to autonomously explore and optimize strategies from accumulated rollout experience, rather than blindly exploring through random sampling.
We adopt a two-stage training pipeline: In the pre-training stage, we train the Executor to equip it with the capability to fuse observed sequences and strategy prompts for bidding decisions. In the post-training stage, we build an interactive bidding simulation environment for strategy rollout, mitigating extrapolation errors associated with offline value estimation. We design Decoupled Group Relative Policy Optimization (D-GRPO) to achieve inter-layer credit assignment, enabling end-to-end optimization of the Planner and Executor. 
In summary, our contributions are as follows:
\begin{itemize}[leftmargin=*,noitemsep,topsep=0pt]
    \item We propose AIGB-R1, a hierarchical self-evolving auto-bidding framework comprising a Planner for macro-level strategy planning and an Executor for fine-grained bidding decision-making, fully leveraging LLMs' reasoning and planning capabilities while effectively circumventing their inherent limitations in limited numerical precision, hallucinations and inference latency.
     \item We design an experience-driven self-evolving loop that enables AIGB-R1 to autonomously explore and optimize strategies from accumulated rollout experience, continuously approaching optimal bidding strategies through sustained interaction with the simulation environment during post-training.
    \item We build an interactive bidding simulation environment for strategy rollout and propose D-GRPO to achieve inter-layer credit assignment through advantage decoupling, enabling end-to-end optimization of the LLM Planner and PDT Executor.
    \item Experimental results on a large-scale public dataset demonstrate that AIGB-R1 significantly outperforms various state-of-the-art baselines, validating the effectiveness of the proposed framework.
\end{itemize}

\section{Related Work}
\label{sec:related work}
\subsection{Auto-bidding Methods}
Auto-bidding helps advertisers achieve their commercial goals by adaptively optimizing bids for each impression \cite{he2021unified,xu2024auto}. Early methods primarily relied on predefined rules or online learning to adjust strategies \cite{chen2011real,yu2017online}. As online bidding environments grew increasingly complex, reinforcement learning methods such as USCB \cite{he2021unified}, SORL \cite{mou2022sustainable}, and MAAB \cite{wen2022cooperative} became essential for auto-bidding. 
Due to the risks of online interaction, offline RL methods including BCQ \cite{fujimoto2019off}, CQL \cite{kumar2020conservative}, and IQL \cite{kostrikov2021offline} have gained widespread attention for their ability to learn effective strategies from offline datasets. However, these methods are subject to the Markov assumption, which limits their sequential modeling capability. 

Recently, generative models represented by Decision Transformer (DT) \cite{chen2021decision,zheng2022online} have opened a new paradigm for auto-bidding. DiffBid \cite{guo2024generative} uses diffusion models to generate bidding trajectories, whereas CBD \cite{li2025generative} introduces a diffusion completer–aligner for trajectory completion. GAS \cite{li2025gas} performs post-training search via MCTS, GAVE \cite{gao2025generative} employs value-guided exploration. PRO-Bid \cite{wu2026constraint} introduces Pareto-prioritized regret optimization, while GRAD \cite{lei2026generative} develops a large-scale pre-trained DT-based bidding model. For the broader auto-marketing setting, GAM \cite{dou2026gam} employs DTs for joint bidding and coupon distribution with GRPO-based preference alignment. 
Several works have begun exploring LLM-empowered auto-bidding. RTBAgent \cite{cai2025rtbagent} uses prompt engineering to directly output bidding decisions via LLMs, SemBid \cite{zhu2026role} integrates LLM semantic representations to assist generative sequence modeling, LBM \cite{li2026lbm} designs a hierarchical Think-Act architecture and employs GQPO for fine-tuning.
\subsection{LLM for Decision-Making}
Recently, the autonomous decision-making potential of LLMs has been extensively explored. Early works such as ReAct \cite{yao2022react} and Reflexion \cite{shinn2023reflexion} established the foundational paradigm of reasoning, acting, and reflection. However, recent studies reveal inherent limitations of LLMs as direct decision-makers \cite{schmied2025llms}, motivating researchers to leverage LLMs for macro-level planning rather than direct action execution. SayCan \cite{ahn2022can}, LgTS \cite{shukla2023lgts}, and DART-LLM \cite{wang2024dart} translate LLM-generated language plans into executable actions through feasibility assessment and effectiveness scoring, while PAR \cite{he2024words} and ACE \cite{wan2025think} propose hierarchical collaboration frameworks between LLMs and RL agents.

Meanwhile, reinforcement learning has been widely adopted to optimize the decision-making capabilities of LLMs. RLHF \cite{ouyang2022training} optimizes models using human feedback to construct reward signals. DPO \cite{rafailov2023direct} and its extensions \cite{azar2024general,ethayarajh2024kto,meng2024simpo,xu2024contrastive} directly fine-tune models with human preference datasets. GRPO \cite{shao2024deepseekmath} improves the stability and efficiency of fine-tuning through group relative advantage estimation. GiGPO \cite{feng2026group} introduces fine-grained credit assignment for multi-step decision-making tasks.
\subsection{Self-Evolving Agents}
Self-evolving agents continuously improve their capabilities through accumulated experience. Early works achieve self-improvement via in-context learning with feedback from experience. Reflexion \cite{shinn2023reflexion}, ExpeL \cite{zhao2024expel}, and Voyager \cite{wang2023voyager} retain interaction experience in the form of linguistic reflections or generalized rules to iteratively improve subsequent decision quality. These methods enable strategy evolution without parameter updates, but are often constrained by the inherent limits of the model’s capabilities. To further enhance LLMs' performance through parameter updates, STaR \cite{zelikman2024star} and ReST \cite{gulcehre2023reinforced} bootstrap LLM reasoning capabilities via iterative self-training. WebRL \cite{qi2025webrl} and SCA \cite{zhou2026self} drive online RL training through automatic curriculum generation. EvolveR \cite{wu2025evolver} designs a collect-refine-guide self-evolving loop.

\section{Preliminary}
\label{sec:pre}
\subsection{Problem Formulation}
We consider the auto-bidding problem with cost-related constraints. Within a bidding period, we assume that $I$ impression opportunities arrive sequentially, indexed by $i$. Advertisers submit bids on the platform to compete for each impression. For each arriving impression $i \in [I]$, an advertiser wins when its bid $b_i$ exceeds those of other advertisers, and pays the winning cost $c_i$, which is typically the highest bid of the other advertisers in the second-price auction (SPA) setting. The advertiser's objective is to maximize the cumulative value of won impressions $\sum_{i}
o_i v_i$ over the entire bidding period, where $v_i$ denotes the value of the impression and $o_i$ indicates whether the advertiser wins it.
Moreover, to control advertising delivery performance, advertisers typically need to satisfy the budget and multiple Key Performance Indicator (KPI) constraints \cite{he2021unified}. The budget constraint is expressed as $\sum_{i} o_i c_i \leq B$, where $c_i$ is the cost of impression $i$ and $B$ is the budget. Other KPI constraints are more complex and can be uniformly expressed as:
\begin{equation}
  \frac{\sum_{i} c_{ij} o_i}{\sum_{i} p_{ij} o_i} \leq C_j,
  \label{eq:kpi}
\end{equation}
where $C_j$ is the upper bound of the $j$-th constraint, $p_{ij}$ can be any performance indicator such as return or constant, and $c_{ij}$ is the cost of the $j$-th constraint.
Therefore, given $J$ constraints, the auto-bidding problem can be formulated as:
  \begin{align}
  \max \quad & \sum_{i} o_i v_i \notag \\
  \text{s.t.} \quad & \sum_{i} o_i c_i \leq B, \notag \\
  & \frac{\sum_{i} c_{ij} o_i}{\sum_{i} p_{ij} o_i} \leq C_j, \quad \forall\, j,
  \label{eq:mcb} \\
  & o_i \in \{0, 1\}, \quad \forall\, i. \notag
  \end{align}

A previous study \cite{he2021unified} has shown that the optimal solution is:
  \begin{equation}
  b_i^* = \lambda_0 v_i + \sum_{j=1}^{J} \lambda_j p_{ij} C_j,
  \label{eq:optimal_bid}
  \end{equation}
where $b_i^*$ is the optimal bid for impression $i$, and $\lambda_j, j \in \{0, \ldots, J\}$ are the optimal bidding parameters. However, the uncertainty and dynamics of online advertising systems make it impractical for advertisers to directly calculate these optimal bidding parameters. Instead, these parameters should be regularly adjusted in response to the dynamic environment, rendering auto-bidding a sequential decision-making task of iteratively identifying the optimal bidding parameters.
\subsection{Sequential Modeling of Auto-Bidding}
Under the sequential modeling formulation of auto-bidding, we first divide the bidding period into $T$ discrete timesteps. At each timestep
$t$, the auto-bidding agent receives a real-time advertising state $s_t \in \mathcal{S}$ and then outputs an action $a_t \in \mathcal{A}$
according to its policy $\pi$ to adjust the bidding parameters. The state transition dynamics of the advertising environment are unknown, and
the next state $s_{t+1} \in \mathcal{S}$ depends not only on the current state $s_t$ and action $a_t$, but also potentially on the historical
bidding trajectory $\tau$. After transitioning to the next state, the environment returns a reward $r_t$, representing the cumulative
impression value obtained during timestep $t$. The auto-bidding agent's objective is to maximize the cumulative impression value $\sum_{t}
r_t$ over the entire bidding period.
The detailed description of our modeling is as follows:

\begin{itemize}[leftmargin=*, noitemsep]
  \item \textbf{State $s_t$:} The state $s_t$ describes the advertising status at timestep $t$, which can include the remaining time of the bidding period, remaining budget, budget consumption speed, and other historical statistics.
  \item \textbf{Action $a_t$:} The action $a_t$ represents the adjustment of bidding parameters $\lambda_j$ ($j=0,\ldots,J$) at timestep $t$, modeled as $(a_t^{\lambda_0}, \ldots, a_t^{\lambda_J})$.
  \item \textbf{Reward $r_t$:} The reward $r_t$ represents the value contributed to the objective during the period from timestep $t$ to $t+1$.
  \item \textbf{Return-To-Go (RTG) $R_t$:} The RTG value $R_t$ represents the sum of rewards to be obtained in the future timesteps:
  \begin{equation}
    R_t = \sum_{t'=t}^{T} r_{t'},
    \label{eq:rtg}
  \end{equation}
  where $T$ is the final timestep.
\end{itemize}

\section{Method}
\label{sec:method}
\subsection{Framework Overview}
\begin{figure*}[t]
    \centering
    \includegraphics[width=0.95\linewidth]{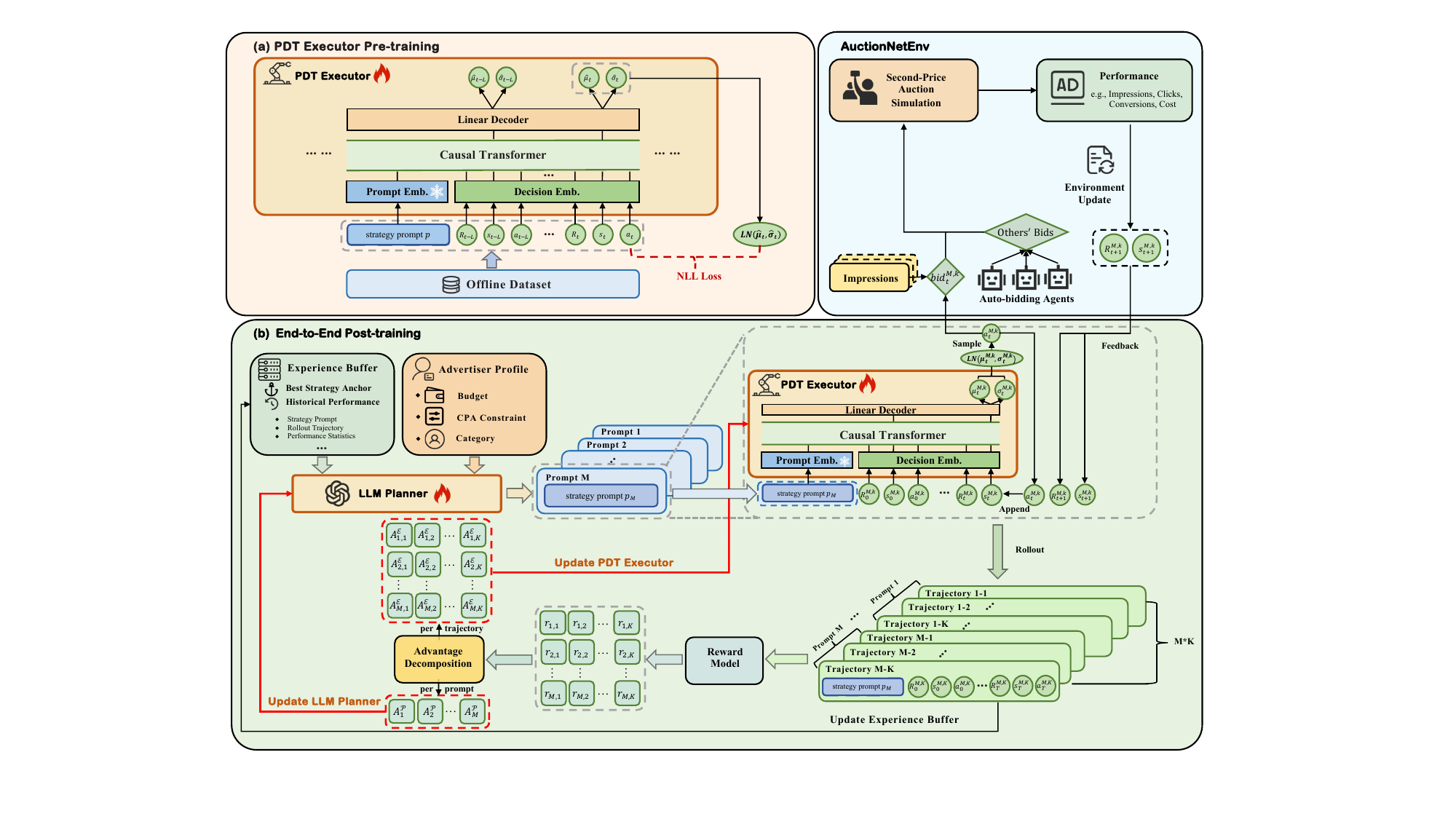}
    \caption{Overall framework of AIGB-R1.}
    \Description{Diagram showing the overall framework of AIGB-R1.}
    \label{fig:framework}
\end{figure*}
In the emerging AIGB paradigm, generative models exemplified by Decision Transformer (DT) have been widely adopted for their ability to learn from offline datasets. However, these methods are limited by the mode coverage of offline datasets and struggle to fully understand task states and strategy preferences, motivating us to explore a new approach that leverages the semantic understanding, reasoning, and planning capabilities of LLMs to empower auto-bidding.
However, LLMs' inherent limitations, including limited numerical precision, hallucinations, and inference latency, make it impractical to directly use a single LLM for end-to-end bidding. 

To address these limitations, a practical architectural design is to decouple strategy reasoning and real-time decision-making into two separate modules using a hierarchical framework, where a large LLM asynchronously performs macro-level
strategy reasoning, while a lightweight decision model makes real-time decisions at each timestep based on the derived strategy and current states.

Accordingly, we temporally decompose the auto-bidding task into macro-level strategy planning followed by fine-grained bidding decision-making, handled by a high-level Planner and a low-level Executor respectively. Specifically, the Planner module employs a large LLM to reason about macro-level strategies for each bidding period and generates structured strategy prompts. The Executor module employs Prompt Decision Transformer (PDT) as a
lightweight decision-maker, generating bidding parameters at each timestep within the bidding period, conditioned on both the observed numerical sequences and the strategy prompts.

Next, we formally describe the sequential decision-making process of AIGB-R1. Consider a bidding period for advertiser $g$. Before the bidding period begins, we obtain the advertiser's contextual information $\mathcal{C}_g$ (including budget, CPA constraints, and advertising category) and its historical bidding performance from previous periods $\mathcal{H}_g$. The high-level LLM Planner $\pi_{\theta_{\mathcal{P}}}$ then plans the macro-level strategy (e.g., risk preference) for the current bidding period based on $\mathcal{C}_g$ and $\mathcal{H}_g$, generating a structured strategy prompt $p$ that contains enumerated bidding strategy labels and macro-level strategy reasoning:
\begin{equation}                                                                         
    p \sim {\pi_{\theta_{\mathcal{P}}}}\bigl(\cdot \mid\mathcal{C}_g, \mathcal{H}_g\bigr), \quad                                                                                    
    \label{eq:planner}                                                                     
\end{equation}
Notably, this planning step is completed before the bidding period begins, and the generated strategy prompt remains effective throughout the entire period.
Subsequently, at each timestep $t$ within the bidding period, the low-level PDT Executor $\pi_{\theta_{\mathcal{E}}}$ generates an action $a_t$ conditioned on the observed numerical sequences of RTG values, states, and actions $(R_{t-L:t}, s_{t-L:t}, a_{t-L:t-1})$ along with the strategy prompt $p$, where $L$ denotes the context length of PDT:
\begin{equation}
  a_t \sim \pi_{\theta_{\mathcal{E}}}\bigl(\cdot \mid p, R_{t-L:t}, s_{t-L:t}, a_{t-L:t-1}\bigr)
  \label{eq:executor}
\end{equation}
PDT embeds the strategy prompt as a conditional prefix to the standard Decision Transformer's input sequence, enabling the Planner's macro-level strategy to naturally influence the conditional autoregressive generation of actions at each timestep.

The overall framework of AIGB-R1 is illustrated in Figure~\ref{fig:framework}. We adopt a two-stage training pipeline of pre-training and post-training alignment: In the pre-training stage, we pre-train the Executor offline to equip it with the capability to fuse observed sequences and strategy prompts for bidding decisions. In the post-training stage, we fine-tune the entire framework end-to-end via reinforcement learning to further discover optimal strategies.
\subsection{PDT Executor Pre-training with Strategy Prompts}
Despite LLMs' strong semantic understanding and reasoning capabilities, directly using them as the low-level Executor faces fundamental limitations. First, industrial bidding systems typically require millisecond-level response times, yet LLMs struggle to simultaneously achieve high performance and low latency. Second, LLMs are inherently unsuitable for tasks with continuous numerical action spaces such as auto-bidding, where the decision model must output precise bidding parameters at each timestep, while LLMs' encoding of numerical inputs as discrete tokens inevitably introduces precision loss and hallucination risks. Furthermore, current public LLMs have not been specifically pre-trained for auto-bidding tasks, meaning their general prior knowledge can hardly provide effective inductive biases for predicting bidding parameters, making it more practical to directly train a dedicated lightweight decision model on offline bidding trajectories.

These limitations motivate us to adopt PDT rather than an LLM as the core architecture of the low-level Executor. PDT extends the standard DT by introducing a textual strategy prompt as a conditional sequence prefix.
Specifically, PDT embeds the strategy prompt $p$ into vector representations and prepends them to the standard DT input sequence $(R_{t-L:t},s_{t-L:t}, a_{t-L:t-1})$, forming the extended input sequence $(p, R_{t-L:t}, s_{t-L:t}, a_{t-L:t-1})$. At timestep $t$, PDT autoregressively generates action $a_t$ conditioned on this sequence. In this way, the strategy prompt influences the conditional autoregressive sampling of actions at each subsequent timestep through the self-attention mechanism, achieving strategy-conditioned decision-making.

The strategy prompt $p$ of the PDT Executor contains two forms of information: enumerated strategy labels and macro-level strategy reasoning. The strategy labels cover predefined discrete strategy dimensions such as risk preference and budget pacing, while the macro-level strategy reasoning provides a summary of historical bidding performance along with semantic reasoning and detailed explanations for the current period's bidding strategy. This dual design balances structural and semantic expressiveness: enumerated strategy labels effectively reduce the variance of strategy learning, enabling the Executor to efficiently learn coarse-grained strategy behavior patterns, while the
macro-level strategy reasoning provides fine-grained semantic strategy expressions, compensating for the strategy details and decision rationale that discrete labels cannot capture. We use pre-trained text embedding layers to process strategy labels and macro-level strategy reasoning into vector representations of equal size, and concatenate them to form the complete strategy prompt representation $p$.

During the pre-training stage of the PDT Executor, we first use an advanced LLM to annotate the offline bidding trajectory dataset with corresponding strategy prompts, and then perform offline supervised training on PDT. Whereas previous DT methods learn deterministic policies $\pi_\theta(a_t|R_{t-L:t}, s_{t-L:t}, a_{t-L:t-1})$, we opt to learn a stochastic policy that maximizes the dataset likelihood, as stochastic policies enable efficient strategy exploration through repeated sampling during post-training and can be directly fine-tuned with policy-based reinforcement learning algorithms.
Specifically, since the action $a_t$ is a non-negative continuous value, we adopt the log-normal distribution to model the stochastic policy:
\begin{equation}
    \small
    \begin{split}
    &\pi_{\theta_{\mathcal{E}}}(a_t \mid p, R_{t-L:t}, s_{t-L:t}, a_{t-L:t-1}) = \\
    &\mathcal{LN}\bigl(\mu_{\theta_{\mathcal{E}}}(a_t \mid p, R_{t-L:t}, s_{t-L:t}, a_{t-L:t-1}),
  \sigma_{\theta_{\mathcal{E}}}(a_t \mid p, R_{t-L:t}, s_{t-L:t}, a_{t-L:t-1})\bigr)
    \end{split}
    \label{eq:pdt_policy}
\end{equation}
During pre-training, we train the PDT Executor to fit the offline data distribution by maximizing the log-likelihood of the stochastic policy over the trajectories in the training dataset, equivalently minimizing the Negative Log-Likelihood (NLL) loss:
\begin{equation}
  \small
  \mathcal{L}_{\text{NLL}} = \frac{1}{L+1}\mathbb{E}_{(p,R,s,a)\sim \mathcal{D}} \left[
  - \sum_{t'=t-L}^{t} \log \pi_{\theta_{\mathcal{E}}}\bigl(a_{t'} \mid p, R_{t'-L:t'}, s_{t'-L:t'},
  a_{t'-L:t'-1}\bigr) \right]
  \label{eq:nll_loss}
  \end{equation}

\subsection{Self-Evolving Post-training via D-GRPO}
Limited by the quality and coverage of offline training datasets, pre-trained generative models often converge to local optima. We therefore employ reinforcement learning in the post-training stage to fine-tune the Planner and Executor end-to-end for further performance improvement. However, conventional RL methods typically rely on repeated random trial-and-error, resulting in massive redundant trajectories and slow convergence. To this end, we build a self-evolving post-training framework consisting of the interactive bidding simulation environment AuctionNetEnv, an experience-guided hierarchical rollout mechanism, and the D-GRPO algorithm for inter-layer credit assignment, thereby enabling AIGB-R1 to internalize and leverage accumulated experience and transforming strategy optimization from undirected trial-and-error into a more effective self-improvement process.

\subsubsection{Interactive Bidding Simulation Environment}
To enable the auto-bidding agent to obtain reward signals through rollouts for guiding strategy optimization, we build a flexible interactive bidding simulation environment, AuctionNetEnv, based on a public large-scale auto-bidding decision-making benchmark \cite{su2024auctionnet}. AuctionNetEnv enables the auto-bidding agent to compete with diverse opponent agents under a second-price auction, with these opponents trained using different decision-making algorithms.
The detailed construction of AuctionNetEnv and its distributional consistency validation are provided in Appendix A.1.


At each training step, the auto-bidding agent samples a bidding task $g$ from AuctionNetEnv's task pool $\mathcal{G}$, described by the advertiser's contextual information $\mathcal{C}_g$ including budget, CPA constraints, and advertising category, and then executes a complete bidding period for this task. At each timestep $t$, the agent generates an action $a_t$ and computes the bid $b_t$. The environment then conducts the second-price auction with the agent's bid $b_t$ and the opponents' bids, returns outcomes such as impressions, conversions $d_t$, and cost $c_t$ for this timestep, and updates the next state $s_{t+1}$.
At the end of the bidding period, the reward model of AuctionNetEnv returns a trajectory reward $r$ based on the agent's cumulative performance under the CPA constraint $C$, incorporating the constraint satisfaction objective into the reward signal to encourage a balance between maximizing impression value and satisfying constraints:
\begin{equation}
\label{eq:reward}
    \begin{cases}
    CPA = \frac{\sum_t c_t}{\sum_t d_t}\\
    \mathbb{P}(CPA;C)=\min \left\{ \left( \dfrac{C}{CPA} \right)^{\eta}, 1 \right\} \\
    r = \mathbb{P}(CPA;C)\cdot \sum_t d_t
    \end{cases}
\end{equation}
where $\eta$ is a hyperparameter that controls the penalty intensity for constraint violation.

By conducting online agent-environment interaction through AuctionNetEnv, AIGB-R1 mitigates the out-of-distribution extrapolation errors associated with offline value estimation methods, thereby providing more reliable learning signals for strategy optimization.

\subsubsection{Experience-Guided Hierarchical Rollout}
To leverage historical experience for transforming strategy optimization from random trial-and-error into a more effective self-improvement process, we design an experience-guided hierarchical rollout mechanism that enables the LLM Planner to explore optimal strategies based on accumulated experience during post-training. We observe that optimal strategies for different bidding tasks often differ significantly due to variations in budget scale and CPA constraint tightness, necessitating task-specific experience accumulation. Therefore, we maintain an experience buffer $\mathcal{H}_g$ for each bidding task $g$ in AuctionNetEnv, consisting of two components: the best-strategy anchor and the sliding memory window. The best-strategy anchor records the strategy prompt and bidding trajectory summary associated with the highest average reward across all rollouts for task $g$, serving as a reference direction for subsequent exploration and enabling the Planner to refine its search within the neighborhood of the known best strategy. The sliding memory window retains detailed bidding performance from recent periods in a FIFO manner, including the strategy prompt, bidding
trajectory, trajectory reward, and corresponding performance statistics of each bidding period, allowing the Planner to observe performance differences across strategies on task $g$ for targeted adjustments.

Credit assignment is a core challenge in end-to-end RL training of hierarchical frameworks like AIGB-R1, as it requires distinguishing the respective contributions of the Planner's strategy selection and the Executor's bidding execution to overall performance. 
We address this challenge through hierarchical rollout. Specifically, for each bidding period of task $g$, we first use the LLM Planner to reason about potentially optimal strategies by referencing the advertiser's contextual information $\mathcal{C}_g$ and historical bidding experience $\mathcal{H}_g$, and output the $M$ most promising and diverse strategy prompts $\{p_1, p_2, \dots, p_M\}$:
\begin{equation}
  p_m \sim \pi_{\theta_{\mathcal{P}}}\bigl(\cdot \mid \mathcal{C}_g, \mathcal{H}_g\bigr), \quad m
   = 1, 2, \dots, M,
  \label{eq:planner_rollout}
\end{equation}
Then, for each strategy prompt $p_m$, we invoke the PDT Executor to perform $K$ rollouts in AuctionNetEnv, producing $K$ bidding trajectories
\begin{equation}
  \tau_{m,k} \sim \pi_{\theta_{\mathcal{E}}}\bigl(\cdot \mid p_m\bigr), \quad k = 1, 2, \dots, K,
  \label{eq:executor_rollout}
\end{equation}
After all rollouts are completed, the strategy prompts and bidding trajectories from this period are appended to the experience buffer $\mathcal{H}_g$.

At this point, for each bidding period of task $g$, we have collected an $M \times K$ nested reward matrix, where the element $(m, k)$ corresponds to the $k$-th rollout trajectory under strategy prompt $p_m$, with reward $r_{m,k}$. The nested structure of this reward matrix encapsulates the dual contributions of the Planner's strategy selection and the Executor's bidding execution, providing a natural basis for inter-layer credit assignment via D-GRPO.
\subsubsection{Decoupled Group Relative Policy Optimization}
In recent years, Group Relative Policy Optimization (GRPO) \cite{shao2024deepseekmath} has been widely adopted for reinforcement fine-tuning of LLMs. However, directly applying GRPO to the hierarchical architecture of AIGB-R1 faces the aforementioned credit assignment problem, as the reward signal couples the dual contributions of the Planner and the Executor, and directly using GRPO for advantage estimation would lead to misattribution between the two layers, undermining training stability.
To this end, we propose Decoupled Group Relative Policy Optimization (D-GRPO), which constructs decoupled advantage signals for the Planner and the Executor respectively, through an orthogonal decomposition of the reward signal, achieving inter-layer credit assignment.

GRPO measures the relative superiority of trajectory \(k\) within a sampled group based on its reward deviation from the group mean, i.e., \(r_k- \frac{1}{K}\sum_{\ell=1}^{K}r_\ell\).
For the reward matrix produced by hierarchical rollout, we introduce the mean reward $\bar{r}_m$ of strategy prompt $p_m$ to further decompose the reward deviation of the nested reward matrix along the sampling hierarchy into two orthogonal components, namely the inter-group reward
component and the intra-group reward component:
\begin{equation}
\label{eq:reward_decomp}
    \begin{cases}
    \bar r_m=\frac{1}{K}\sum_{k}r_{m,k}\\
    \bar{r}= \frac{1}{MK}\sum_{m}\sum_{k} r_{m,k},\\
    r_{m,k} - \bar{r} = \bigl(\bar{r}_m - \bar{r}\bigr) + \bigl(r_{m,k} - \bar{r}_m\bigr)
    \end{cases}
\end{equation}
The inter-group component $\bar{r}_m - \bar{r}$ averages over $K$ rollouts for each strategy prompt $p_m$, reducing the randomness of
individual Executor executions, so that this signal reflects the quality of the Planner's strategy selection. The intra-group component
$r_{m,k} - \bar{r}_m$ subtracts the mean reward of the corresponding strategy prompt from each trajectory's reward, mitigating the influence of
the strategy prompt itself, so that this signal reflects the Executor's decision-making performance under a given strategy prompt.
The orthogonality of this decomposition is formally proved in Appendix~\ref{app:orthogonality-proof}.

We normalize the inter-group and intra-group reward components along the strategy prompt dimension and the trajectory dimension respectively,
yielding the advantage $\hat{A}_m^{\mathcal{P}}$ for each strategy prompt $p_m$ output by the Planner and the advantage
$\hat{A}_{m,k}^{\mathcal{E}}$ for each bidding trajectory $\tau_{m,k}$ output by the Executor:
\begin{equation}                                                                                   
    \hat{A}_m^{\mathcal{P}} = \frac{\bar{r}_m - \bar{r}}{\sigma^{\mathcal{P}}}, \quad                    
    \hat{A}_{m,k}^{\mathcal{E}} = \frac{r_{m,k} - \bar{r}_m}{\sigma^{\mathcal{E}}}                   
    \label{eq:advantages}                                                                            
\end{equation} 
where $\sigma^{\mathcal{P}}$ and $\sigma^{\mathcal{E}}$ denote the standard deviations of the inter-group and intra-group reward components, respectively.

At each training step, we jointly optimize the Planner and the Executor by maximizing the following objectives, achieving end-to-end training:
\begin{equation}
    \small
    \begin{split}
    \mathcal{J}&_{GRPO}(\theta_{\mathcal{P}}) =
    \mathbb{E}_{(\mathcal{C},\mathcal{H})\sim\mathcal{D},\;
    \{p_m\}_{m=1}^{M}\sim\pi_{\mathrm{old}}^{\mathcal{P}}
    (p\,|\,\mathcal{C},\mathcal{H})}
    \frac{1}{M}\sum_{m=1}^{M}\frac{1}{|p_m|}\sum_{t=1}^{|p_m|}\\
    &\quad \bigg\{\min\Big(\rho_{m,t}^{\mathcal{P}}\,\hat{A}_{m}^{\mathcal{P}},\;
    \operatorname{clip}\big(\rho_{m,t}^{\mathcal{P}},\,1\!\pm\!\epsilon \big
  )\,\hat{A}_{m}^{\mathcal{P}}\Big)
    - \beta_{\mathcal{P}}\,\mathbb{D}_{\mathrm{KL}}\big[\pi_{\theta_{\mathcal{P}}}\,\|\,\pi_{\mathrm{ref}}^{\mathcal{P}}\big]
    \bigg \}
    \end{split}
    \label{eq:loss_planner}
\end{equation}

\begin{equation}
      \small
      \begin{split}
      \mathcal{J}&_{GRPO}(\theta_{\mathcal{E}}) =
      \mathbb{E}_{p_m\sim P,\;
      \{\tau_{m,k}\}_{k=1}^{K}\sim\pi_{\mathrm{old}}^{\mathcal{E}}(\tau\,|\,p_m)}
      \frac{1}{K}\sum_{k=1}^{K}\frac{1}{|\tau_{m,k}|}\sum_{t=1}^{|\tau_{m,k}|}\\
      &\quad \bigg\{\min\Big(\rho_{m,k,t}^{\mathcal{E}}\,\hat{A}_{m,k}^{\mathcal{E}},\;
      \operatorname{clip}\big(\rho_{m,k,t}^{\mathcal{E}},\,1\!\pm\!\epsilon\big)\,\hat{A}_{m,k}^{\mathcal{E}}\Big)
      - \beta_{\mathcal{E}}\,\mathbb{D}_{\mathrm{KL}}\big[\pi_{\theta_{\mathcal{E}}}\,\|\
\pi_{\mathrm{ref}}^{\mathcal{E}}\big]
      \bigg\}
      \end{split}
      \label{eq:loss_executor}
  \end{equation}
where $\rho$ denotes the importance sampling weight, and $\epsilon$ and $\beta$ are training hyperparameters.
The detailed post-training procedure is shown in Appendix \ref{app:algorithm}.

\begin{table*}[!t]
  \caption{Performance comparison. The boldface denotes the best performance.}       
  \vspace{-3mm}                                                                                        
  \setlength\tabcolsep{8pt}                                                                            
  \renewcommand{\arraystretch}{1.0}                                                                    
  \resizebox{\textwidth}{!}{
  \begin{tabular}{c|cccccccccccccccc}
  \hline\hline
  Dataset & USCB & CQL  & IQL  & BCQ  & DT   & DT-score & DiffBid  &CBD &
  GAS  &GAVE &PRO-Bid &GRAD  & LBM  & AIGB-R1   \\ \hline
  \multirow{1}{*}{AuctionNet}   & 157  & 171  & 281  & 321  & 329  & 334  & 152 & 298
   & 359  & 376 & 372 & 372 &348 &\textbf{385}\pm 3.61   \\ \hline
   \multirow{1}{*}{AuctionNet-sparse}  & 17.5  & 22.2  & 30.0  & 31.1  & 29.6  & 33.2  & 19.5
   &37.0 & 36.1  & 37.2 &38.1 &37.4  & 33.4 &\textbf{39.0}\pm 0.37  \\ \hline\hline 
  \end{tabular}}                                                                        
  \label{tab:comp_all}                                                                                 
\end{table*}    


\section{Experiment}
\label{sec:exp}
\subsection{Experimental Setup}
\subsubsection{Dataset}
To comprehensively evaluate the performance of AIGB-R1 on large-scale advertising auctions, we utilize AuctionNet, a large-scale public real-world bidding dataset released by Alibaba, together with its more challenging sparse variant, AuctionNet-Sparse. Both datasets contain approximately 480K bidding trajectories, each consisting of 48 timesteps and millions of impression opportunities. Details are provided in Appendix \ref{app:dataset}.

\subsubsection{Evaluation Metrics}
We adopt the following metrics to evaluate the performance:
\begin{itemize}[leftmargin=*, noitemsep]
  \item \textbf{Conversions}: The total conversions obtained during the bidding period, calculated as $\sum_{i} o_i v_i$.
  \item \textbf{Score}: A metric for jointly evaluating conversions and CPA constraint satisfaction, defined as $\text{score} = (\sum_i o_i
v_i) \times \text{penalty}$, where the penalty term $\text{penalty} = \min\left\{\left(\frac{C}{C_{real}}\right)^2, 1\right\}$, and $C_{real}$
and $C$ denote the strategy's realized CPA and CPA constraint, respectively.
\end{itemize}

\subsubsection{Baselines}
We compare AIGB-R1 against various state-of-the-art baselines. 
For RL-based methods, we compare with the online RL method USCB \cite{he2021unified} and offline RL methods BCQ \cite{fujimoto2019off}, CQL \cite{kumar2020conservative}, and IQL \cite{kostrikov2021offline}. 
For generative methods, we compare with both diffusion-based and DT-based methods.
Diffusion-based baselines include DiffBid ~\cite{guo2024generative}, which generates bidding trajectories via conditional diffusion modeling, and CBD ~\cite{li2025generative}, which adopts a diffusion completer-aligner framework for large-scale competitive auctions.
DT-based baselines include DT ~\cite{chen2021decision} and DT-score ~\cite{xu2024auto}, which leverages a reward function to jointly represent the winning value and KPI constraints.
We further include GAS ~\cite{li2025gas}, which performs post-training search via MCTS, GAVE ~\cite{gao2025generative}, which employs value-guided exploration, PRO-Bid ~\cite{wu2026constraint}, which introduces Pareto-prioritized regret optimization for constraint-aware bidding, and GRAD ~\cite{lei2026generative}, which develops a large-scale pre-trained DT-based bidding model.
For LLM-based methods, we compare with LBM~ \cite{li2026lbm}, which designs a hierarchical Think-Act architecture with GQPO-based fine-tuning.

\subsubsection{Implementation Details}
We deploy the baseline models following their standard implementations with default hyperparameters from the original papers. The Planner module
of AIGB-R1 uses the Qwen2.5-32B-Instruct model with parameter-efficient fine-tuning based on the ms-swift framework \cite{zhao2024swiftascalablelightweightinfrastructure}, with a batch size of 128 and a learning rate of 5e-6. The Executor module follows the official code of DT \cite{chen2021decision} and Prompt-DT \cite{xu2022prompting}
with necessary adaptations. The Executor adopts a causal transformer architecture with 6 attention layers and 8 attention heads, with a hidden size of 512, constituting a lightweight decision model. The Executor is optimized using AdamW with a learning rate of 1e-5 and a batch size of 512. Our training is conducted on NVIDIA A100 GPUs using the PyTorch framework. We implement the D-GRPO
algorithm compatible with the PDT architecture following GAM \cite{dou2026gam}, with training hyperparameters set to default values.
More implementation details are provided in Appendix~\ref{app:implementation}.

\subsection{Performance Comparison}

We present a comprehensive comparison between AIGB-R1 and various baseline approaches, using Score as the evaluation metric, with results on AuctionNet and AuctionNet-sparse summarized in Table~\ref{tab:comp_all}. To further assess the generalization ability and robustness of our method, we evaluate AIGB-R1 against state-of-the-art generative baselines under different budget settings. Specifically, we conduct experiments on both datasets across five budget ratios: \{50\%, 75\%, 100\%, 125\%, 150\%\}, with results reported in Table~\ref{tab:budget_ratio}.
Our experimental results show that:

\begin{itemize}[leftmargin=*, topsep=1pt, itemsep=0pt, parsep=0pt, partopsep=0pt]
  \item DT-based generative bidding methods significantly outperform RL methods such as IQL, CQL, and USCB, demonstrating the effectiveness of
generative models in solving complex sequential decision-making tasks like advertising auctions. Notably, advanced generative baselines such as GAS, GAVE, PRO-Bid, and GRAD further outperform DT and its simple variants through Monte Carlo tree search, value-guided exploration, constraint-aware regret optimization, and large-scale pre-training, respectively, indicating the effectiveness of incorporating specialized strategy exploration and optimization mechanisms into generative auto-bidding.
  \item Although the LLM-based method outperforms the standard DT, it still falls significantly behind advanced generative baselines, likely due to the inherent limitations of LLMs as decision models in handling complex numerical decision-making tasks. In contrast, AIGB-R1 leverages LLMs for high-level strategy planning while training a dedicated PDT as the decision model, fully exploiting LLMs' reasoning and planning capabilities while circumventing this limitation.
  \item AIGB-R1 demonstrates the best performance across  all experimental settings, consistently outperforming existing state-of-the-art baselines. These results validate the effectiveness of our hierarchical Planner-Executor architecture and end-to-end self-evolving training paradigm, while demonstrating strong generalization and robustness.
\end{itemize}
\begingroup
\setlength{\intextsep}{4pt}

\begin{table}[H]
\centering
\caption{Generalization to various budget settings.}
\label{tab:budget_ratio}
\vspace{-3mm}
\footnotesize
\setlength{\tabcolsep}{3pt}
\renewcommand{\arraystretch}{0.90}

\begin{tabularx}{\columnwidth}{
  >{\centering\arraybackslash}p{0.18\columnwidth}|
  *{5}{>{\centering\arraybackslash}X}
}
\hline\hline
\multicolumn{6}{c}{AuctionNet} \\
\hline
Method & 50\% & 75\% & 100\% & 125\% & 150\% \\
\hline
DiffBid & 54  & 100 & 152 & 193 & 234 \\
GAS     & 193 & 287 & 359 & 409 & 461 \\
GAVE    & 201 & 296 & 376 & 421 & 467 \\
PRO-Bid & 204 & 291 & 372 & 426 & 471 \\
GRAD    & 204 & 293 & 372 & 432 & 476 \\
AIGB-R1 & \textbf{231}
        & \textbf{313}
        & \textbf{385}
        & \textbf{437}
        & \textbf{489} \\
\hline\hline
\multicolumn{6}{c}{AuctionNet-sparse} \\
\hline
Method & 50\% & 75\% & 100\% & 125\% & 150\% \\
\hline
DiffBid & 9.87 & 15.4 & 19.5 & 25.3 & 30.8 \\
GAS     & 18.4 & 27.5 & 36.1 & 40.0 & 46.5 \\
GAVE    & 19.6 & 28.3 & 37.2 & 42.7 & 47.4 \\
PRO-Bid & 21.0 & 28.8 & 38.1 & 43.7 & 49.9 \\
GRAD    & 20.0 & 28.5 & 37.4 & 43.2 & 47.5 \\
AIGB-R1 & \textbf{21.1}
        & \textbf{29.6}
        & \textbf{39.0}
        & \textbf{45.1}
        & \textbf{50.9} \\
\hline\hline
\end{tabularx}
\end{table}

\endgroup

\subsection{Ablation Study}
To evaluate the effectiveness of each component in AIGB-R1, we conduct an  ablation study by evaluating the following modified variants of AIGB-R1:

\begin{itemize}[leftmargin=*, topsep=1pt, itemsep=0pt, parsep=0pt, partopsep=0pt]
  \item \textbf{AIGB-R1(E) w/o RL}: Removes the Planner module and the post-training stage, using only the pre-trained Executor to predict
bidding parameters.
  \item \textbf{AIGB-R1(E)}: Removes the Planner module and applies standard GRPO to post-train the pre-trained Executor.
  \item \textbf{AIGB-R1(SP) w/o RL}: Removes the post-training stage and uses a frozen LLM as a static Planner to provide strategy prompts for
the pre-trained Executor at test time.
  \item \textbf{AIGB-R1(SP)}: Uses a frozen static Planner during post-training, i.e., only the Executor's parameters are updated.
\end{itemize}
\nopagebreak[4]
\noindent
Table~\ref{tab:ablation} presents the performance comparison of modified variants, showing that every core component of AIGB-R1 is indispensable.
Specifically, AIGB-R1(E) w/o RL performs the worst, reflecting the inherent limitation that purely offline pre-trained generative models are
constrained by the quality and coverage of training datasets.
AIGB-R1(E) achieves a significant improvement over AIGB-R1(E) w/o RL by applying post-training. This trend is also observed in the comparison
between AIGB-R1(SP) and AIGB-R1(SP) w/o RL, confirming the importance of post-training for improving generative models' performance.
Meanwhile, the AIGB-R1(SP) series consistently outperforms the AIGB-R1(E) series, indicating that LLMs' reasoning and planning capabilities
provide effective macro-level strategy priors for generative decision models, validating the rationale for our hierarchical architecture that
decouples planning and execution.
Furthermore, the significant improvement of AIGB-R1 over AIGB-R1(SP) demonstrates that the static Planner with frozen parameters in
AIGB-R1(SP) has limited ability to improve strategies through in-context learning. AIGB-R1 achieves the best performance by using D-GRPO to
jointly optimize the Planner and Executor, validating that end-to-end training is indispensable for unleashing the full potential of the
hierarchical architecture.

\begingroup
\setlength{\intextsep}{4pt}

\begin{table}[H]
\centering
\caption{Ablation Study}
\label{tab:ablation}
\vspace{-3mm}
\setlength{\tabcolsep}{8pt}
\renewcommand{\arraystretch}{1.0}

\begin{tabularx}{\columnwidth}{
  c|
  >{\centering\arraybackslash}X
  >{\centering\arraybackslash}X
}
\hline\hline
Model & Score & Conversions \\
\hline
AIGB-R1(E) w/o RL  & 343 & 362 \\
AIGB-R1(E)         & 367 & 395 \\
AIGB-R1(SP) w/o RL & 351 & 388 \\
AIGB-R1(SP)        & 371 & 396 \\
\hline
AIGB-R1 & \textbf{385} & \textbf{404} \\
\hline\hline
\end{tabularx}
\end{table}

\endgroup

\section{Conclusion}
\label{sec:con}
In this work, we propose a hierarchical self-evolving auto-bidding framework, termed AIGB-R1, which achieves autonomous exploration and continuous evolution of bidding strategies through an experience-driven self-evolving loop powered by LLMs' reasoning, planning, and reflection capabilities.
We design a hierarchical Planner-Executor architecture that employs LLMs for high-level bidding strategy planning and PDT for low-level execution, fully leveraging LLMs' reasoning and planning capabilities while circumventing their inherent limitations in complex numerical decision-making tasks.
We adopt a two-stage training pipeline comprising offline pre-training and post-training alignment, and propose an end-to-end self-evolving post-training paradigm for efficient strategy exploration and continuous self-improvement.
We also design a hierarchical rollout mechanism and the corresponding Decoupled Group Relative Policy Optimization (D-GRPO) algorithm, achieving credit assignment through advantage decoupling and ensuring stable end-to-end optimization.
Experimental results on a large-scale public dataset demonstrate the effectiveness of AIGB-R1, providing a promising solution for LLM-empowered auto-bidding.

\newpage
\bibliographystyle{ACM-Reference-Format}
\bibliography{software}

\appendix
\section{Appendix}
\subsection{AuctionNetEnv Details}
\label{app:AuctionNetEnv}
\subsubsection{Construction Details}

Based on the public, large-scale auto-bidding decision benchmark AuctionNet \cite{su2024auctionnet}, we construct an interactive bidding simulation environment, AuctionNetEnv, to provide flexible agent-environment interaction during the post-training phase. AuctionNetEnv reuses the ad opportunity generation module, the ad auction module, and various auto-bidding opponent agents provided by AuctionNet, aligns them with the distribution characteristics of the training dataset, and encapsulates them into an OpenAI Gym-style interface to be invoked during the rollout process.

In AuctionNetEnv, each rollout trajectory corresponds to a complete bidding period of $T=48$ timesteps. Within each bidding period, the ad opportunity generator produces approximately 500K impression opportunities in total, distributed across the 48 timesteps following the temporal traffic patterns of a real-world advertising platform. The task pool $\mathcal{G}$ contains 48 bidding tasks, each associated with a distinct advertiser configuration including budget, CPA constraint, and advertising
category. During post-training, an advertiser configuration is sampled from $\mathcal{G}$ to serve as the player agent, which is controlled by AIGB-R1 and engages in online bidding against opponent agents that are pre-trained on the training dataset using various algorithms (PID Controller, Online LP, IQL, etc.), thereby simulating the heterogeneous competitive dynamics of real-world advertising platforms. The auction module employs the second-price auction mechanism, with each impression opportunity corresponding to a single ad slot. At each timestep, the player agent submits a bid for each impression opportunity; if the bid exceeds the highest bid among all opponent agents, the player wins the impression and pays the winning cost. For each won impression,  the conversion outcome is generated by Bernoulli sampling with the impression value as the success probability.

\subsubsection{Distributional Consistency of AuctionNetEnv}
To assess whether AuctionNetEnv preserves the key distributional properties of the historical dataset, we compare the simulated statistics with the corresponding data statistics. We report the mean, standard deviation, and median of representative traffic-side and cost-side metrics, together with the Sim/Data ratio. A ratio closer to 1.0 indicates better alignment with the historical dataset.
 \begin{table*}[t]
      \caption{Distributional consistency of AuctionNetEnv against the historical dataset. Ratio denotes Sim/Data, and values closer to 1.0 indicate better alignment with the data distribution.}
      \label{tab:traffic_fidelity}
      \vspace{-3mm}
      \centering
      \setlength\tabcolsep{6pt}
      \renewcommand{\arraystretch}{1.0}
      \begin{tabular}{c|c|ccc|ccc|ccc}
      \hline\hline
      \multirow{2}{*}{Dataset} & \multirow{2}{*}{Metric} & \multicolumn{3}{c|}{Mean} & \multicolumn{3}{c|}{Std} &
    \multicolumn{3}{c}{Median} \\
      \cline{3-11}
       &  & Data & Sim & Ratio & Data & Sim & Ratio & Data & Sim & Ratio \\
      \hline
      \multirow{3}{*}{AuctionNet}
        & pValue         & 0.00479 & 0.00480 & 1.003 & 0.00377 & 0.00364 & 0.966 & 0.00385 & 0.00388 & 1.008 \\
        & LWC            & 0.0892  & 0.0847  & 0.950 & 0.0293  & 0.0243  & 0.830 & 0.0810  & 0.0785  & 0.968 \\
        & PV count/tick  & 10416   & 10416   & 1.000 & 6353    & 6743    & 1.061 & 10573   & 10483   & 0.991 \\
      \hline
      \multirow{3}{*}{\shortstack{AuctionNet\\-sparse}}
        & pValue         & 0.000479 & 0.000480 & 1.002 & 0.000377 & 0.000364 & 0.966 & 0.000385 & 0.000388 & 1.008 \\
        & Marketing            & 0.0924   & 0.0883   & 0.956 & 0.0273   & 0.0213   & 0.780 & 0.0851   & 0.0835   & 0.980 \\
        & PV count/tick  & 10416    & 10416    & 1.000 & 6353     &  6743    & 1.061 & 10573    & 10483    & 0.991 \\
      \hline\hline
      \end{tabular}
  \end{table*}
  
\subsection{Proof of Reward Decomposition Orthogonality}
\label{app:orthogonality-proof}

\begin{proposition}
\label{prop:orthogonality}
The centered reward deviation of the hierarchical rollout reward matrix can be decomposed into two orthogonal components.
\end{proposition}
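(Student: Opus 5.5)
We view the $M\times K$ reward matrix $(r_{m,k})$ as a vector in $\mathbb{R}^{MK}$ with the standard Euclidean inner product $\langle x,y\rangle=\sum_{m}\sum_{k}x_{m,k}y_{m,k}$. The two pieces from Eq.~\eqref{eq:reward_decomp} are then the vectors
\[
u_{m,k}=\bar r_m-\bar r,\qquad w_{m,k}=r_{m,k}-\bar r_m .
\]
Because the $\bar r_m$ terms cancel, $u+w$ equals the centered deviation $r_{m,k}-\bar r$ entrywise. What remains is to show that $\langle u,w\rangle=0$.

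The computation goes as follows. We write $\langle u,w\rangle=\sum_m(\bar r_m-\bar r)\sum_k(r_{m,k}-\bar r_m)$. This step uses only the fact that $u_{m,k}$ does not depend on $k$, so it can be pulled out of the inner sum. By the definition $\bar r_m=\frac1K\sum_k r_{m,k}$, the inner sum is $K\bar r_m-K\bar r_m=0$ for every $m$, and therefore the whole inner product vanishes.

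As a corollary we obtain the Pythagorean (ANOVA) identity
\[
\sum_{m,k}(r_{m,k}-\bar r)^2=K\sum_m(\bar r_m-\bar r)^2+\sum_{m,k}(r_{m,k}-\bar r_m)^2 .
\]
It follows by expanding $\|u+w\|^2=\|u\|^2+\|w\|^2+2\langle u,w\rangle$. This identity makes precise the claim that the total variance splits into an inter-group part and an intra-group part. It also justifies normalizing each part separately by $\sigma^{\mathcal P}$ and $\sigma^{\mathcal E}$ in Eq.~\eqref{eq:advantages}. For completeness we would also note that $u$ lies in the subspace of vectors that are constant within each group $m$ and sum to zero. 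The vector $w$ lies in the subspace of vectors whose entries sum to zero within each group. These two subspaces are orthogonal complements inside the centered subspace, so the decomposition is unique, with $u$ and $w$ being the orthogonal projections of the centered deviation.

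There is no real obstacle here. The only point requiring care is to fix the inner product explicitly, namely the unweighted sum over all $MK$ entries. If the rows had unequal group sizes or were given non-uniform weights, orthogonality would require the matching weighted inner product. With the equal $K$ rollouts per prompt used in the hierarchical rollout, the plain Euclidean product works.
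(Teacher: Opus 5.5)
Your proof is correct and is essentially the paper's argument: the paper also takes the Frobenius (entrywise) inner product of $B_{m,k}=\bar r_m-\bar r$ and $W_{m,k}=r_{m,k}-\bar r_m$, pulls the $k$-independent factor out of the inner sum, and uses $\sum_k (r_{m,k}-\bar r_m)=0$; your ANOVA identity and projection/uniqueness remarks are correct additions the paper does not make. One small correction to your closing caveat: unequal group sizes $K_m$ would not require a weighted inner product, because the same pull-out step gives $\sum_m(\bar r_m-\bar r)\sum_{k=1}^{K_m}(r_{m,k}-\bar r_m)=0$ under the plain entrywise product (only genuinely non-uniform per-entry weights would force weighted group means).
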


\begin{proof}
Consider the reward matrix obtained from hierarchical rollout:
\begin{equation}
R = \{r_{m,k}\}_{m=1,\,k=1}^{M,\,K},
\end{equation}
where $m$ indexes the $m$-th strategy prompt and $k$ indexes the $k$-th rollout trajectory under this prompt. Define the mean reward under the $m$-th strategy prompt as
\begin{equation}
\bar{r}_m = \frac{1}{K}\sum_{k=1}^{K} r_{m,k},
\end{equation}
and the global mean reward as
\begin{equation}
\bar{r} = \frac{1}{MK}\sum_{m=1}^{M}\sum_{k=1}^{K} r_{m,k}.
\end{equation}
Then the centered reward deviation of each element can be decomposed as
\begin{equation}
r_{m,k} - \bar{r} = \underbrace{(\bar{r}_m - \bar{r})}_{B_{m,k}} + \underbrace{(r_{m,k} - \bar{r}_m)}_{W_{m,k}},
\label{eq:reward-decomposition}
\end{equation}
where $B$ denotes the \emph{inter-group} component across strategy prompts, and $W$ denotes the \emph{intra-group} component within the same strategy prompt. In matrix
form,
\begin{equation}
R - \bar{r}\,\mathbf{1}\mathbf{1}^{\!\top} = B + W.
\end{equation}

We now prove that $B$ and $W$ are orthogonal under the Frobenius inner product. Substituting the definitions of $B_{m,k}$ and $W_{m,k}$,
\begin{equation}
\langle B,\, W \rangle_F
= \sum_{m=1}^{M}\sum_{k=1}^{K} (\bar{r}_m - \bar{r})\,(r_{m,k} - \bar{r}_m).
\end{equation}
For a fixed $m$, the factor $(\bar{r}_m - \bar{r})$ is independent of $k$, so
\begin{equation}
\langle B,\, W \rangle_F
= \sum_{m=1}^{M} (\bar{r}_m - \bar{r}) \sum_{k=1}^{K} (r_{m,k} - \bar{r}_m).
\end{equation}
By the definition of $\bar{r}_m$,
\begin{equation}
\sum_{k=1}^{K}(r_{m,k} - \bar{r}_m)
= \sum_{k=1}^{K} r_{m,k} - K\bar{r}_m
= K\bar{r}_m - K\bar{r}_m = 0.
\end{equation}
Therefore, $\langle B,\, W \rangle_F = 0$, proving that the inter-group and intra-group components are orthogonal.
\end{proof} 

\subsection{Post-training Procedure}
\label{app:algorithm}
Algorithm~\ref{alg:dgrpo} describes the complete post-training procedure of AIGB-R1. 
At each training step, we first sample a bidding task and obtain its contextual information and historical bidding experience. 
We then perform the experience-guided hierarchical rollout. The LLM Planner then generates $M$ diverse strategy prompts based on the context and historical experience, and the PDT Executor performs $K$ rollouts in AuctionNetEnv for each
strategy prompt. 
During advantage estimation, we decouple the reward signal into inter-group and intra-group components to construct separate advantage signals for the Planner and the Executor respectively. 
Finally, we update the parameters of the Planner and the Executor using their corresponding advantages, and append the rollout results to the experience buffer.

\removelatexerror
\begin{algorithm}[H]
\SetAlgoNoLine
\caption{Post-training Procedure}
\label{alg:dgrpo}
{\bf Input:} Planner $\pi_{\theta_{\mathcal{P}}}$, pre-trained Executor $\pi_{\theta_{\mathcal{E}}}$, task pool $\mathcal{G}$ of AuctionNetEnv, number of strategy prompts $M$, number of rollouts per prompt $K$\;
Initialize experience buffer $\mathcal{H}_g \leftarrow \emptyset$ for each task $g \in \mathcal{G}$\;
Initialize reference models $\pi_{\mathrm{ref}}^{\mathcal{P}} \leftarrow \pi_{\theta_{\mathcal{P}}}$, $\pi_{\mathrm{ref}}^{\mathcal{E}} \leftarrow \pi_{\theta_{\mathcal{E}}}$\;
\For{each training step}{
  Sample bidding task $g$ from $\mathcal{G}$, obtain contextual information $\mathcal{C}_g$ and historical experience $\mathcal{H}_g$\;
  \tcc{Experience-Guided Hierarchical Rollout}
  Planner generates $M$ diverse strategy prompts $\{p_1, \dots, p_M\} \sim \pi_{\theta_{\mathcal{P}}}(\cdot \mid \mathcal{C}_g, \mathcal{H}_g)$\;
  \For{$m = 1, \dots, M$}{
      \For{$k = 1, \dots, K$}{
          Execute rollout $\tau_{m,k} \sim \pi_{\theta_{\mathcal{E}}}(\cdot \mid p_m)$ in AuctionNetEnv\;
          Compute trajectory reward $r_{m,k}$\;
      }
  }
  \tcc{Decoupled Advantage Estimation}
  $\bar{r}_m \leftarrow \frac{1}{K}\sum_{k=1}^{K} r_{m,k}$, \quad $\bar{r} \leftarrow \frac{1}{MK}\sum_{m,k} r_{m,k}$\;
  $\hat{A}_m^{\mathcal{P}} \leftarrow \frac{\bar{r}_m - \bar{r}}{\sigma^{\mathcal{P}}}$ \tcp*{Inter-group}
  $\hat{A}_{m,k}^{\mathcal{E}} \leftarrow \frac{r_{m,k} - \bar{r}_m}{\sigma^{\mathcal{E}}}$ \tcp*{Intra-group}
  \tcc{Parameter Update}
  Update $\theta_{\mathcal{P}}$ by maximizing $\mathcal{J}_{GRPO}(\theta_{\mathcal{P}})$\;
  Update $\theta_{\mathcal{E}}$ by maximizing $\mathcal{J}_{GRPO}(\theta_{\mathcal{E}})$\;
  \tcc{Update Experience Buffer}
  Update $\mathcal{H}_g$ with rollout results\;
  Update best-strategy anchor if $\max_m \bar{r}_m$ exceeds current best\;
}
{\bf Output:} Optimized Planner $\pi_{\theta_{\mathcal{P}}}$ and Executor $\pi_{\theta_{\mathcal{E}}}$\;
\end{algorithm}

\subsection{Implementation Details}
\label{app:implementation}
\subsubsection{Dataset Details}
\label{app:dataset}
The AuctionNet benchmark comprises two datasets, AuctionNet and AuctionNet-sparse, where AuctionNet-sparse is a sparse variant of AuctionNet with fewer conversions. Each dataset contains 21 advertising delivery periods, each with approximately 500,000 impression opportunities, divided into 48 time intervals. Detailed parameters are summarized in Table~\ref{tab:dataset}.
\begin{table}[H]
  \centering
  \caption{Detailed parameters of AuctionNet and AuctionNet-sparse.}
  \scalebox{0.9}{
    \begin{tabular}{ccc}
    \toprule
    Parameters & AuctionNet & AuctionNet-sparse \\
    \midrule
    Trajectories & 479,376 & 479,376 \\
    Delivery Periods & 9,987 & 9,987 \\
    Timesteps in a trajectory & 48    & 48 \\
    State dimension & 16    & 16 \\
    Action dimension & 1     & 1 \\
    Action range & [0, 493] & [0, 589] \\
    Impression's value range & [0, 1] & [0, 1] \\
    CPA range & [6, 12] & [60, 130] \\
    Total conversion range & [0, 1512] & [0, 57] \\
    \bottomrule
    \end{tabular}%
    }
  \label{tab:dataset}
\end{table}

\subsubsection{Hyperparameter Settings}
\label{app:hyperparams}
We summarize the hyperparameter settings used in AIGB-R1 in Table~\ref{tab:hyperparams}.

\begin{table}[H]
  \centering
  \caption{Detailed hyperparameter settings of AIGB-R1.}
  \scalebox{0.9}{
    \begin{tabular}{c|cc}
    \hline
    Module & Hyperparameter & Value \\
    \hline
    \multirow{8}{*}{LLM Planner}
      & Batch size & 128 \\
      & Number of steps & 500 \\
      & Max completion length & 512 \\
      & Learning rate & 5e-6  \\
      & Epsilon for PPO clipping & 0.1 \\
      & Number of strategy prompts ($M$) & 3 \\
      & Experience buffer depth & 4 \\
      & Number of rollouts per prompt ($K$) & 8 \\
    \hline
    \multirow{11}{*}{PDT Executor}
      & Batch size & 512 \\
      & Number of steps & 250000 \\
      & Sequence length & 20 \\
      & Learning rate & 1e-5 \\
      & Number of attention layers & 6 \\
      & Number of heads & 8 \\
      & Scale & 1000 \\
      & Hidden size & 512 \\
      & Target return & 2 \\
      & Activation function & ReLU \\
      & Epsilon for PPO clipping & 0.1 \\
    \hline
    \end{tabular}%
    }
  \label{tab:hyperparams}
\end{table}

\subsubsection{Prompt Templates}
\label{app:prompt}
To provide a detailed illustration of the prompt design in AIGB-R1, we present the core prompt template used to guide the Planner in generating strategy prompts. 

 \begin{promptbox}[System Prompt]
\small
You are a professional ad bidding expert, responsible for helping advertisers plan optimal bidding strategies.

\textbf{BACKGROUND:}
The advertiser participates in a one-day auction period consisting of 48 timesteps. At each step, the advertiser determines the optimal bid multiplier $\alpha \geq 0$ based on market conditions. The advertiser competes in a second-price auction; the highest bidder wins the impression and pays the second-highest price. The objective is to maximize total conversions while keeping actual CPA within the CPA constraint.

\textbf{OBJECTIVE:}
Your goal is to maximize the advertiser's score within the auction period, defined as: $\text{score}=\text{total\_conversion} \times \text{penalty}$,
where $\text{penalty} = \min\big((\text{cpa\_target} / \text{cpa\_actual})^{2},\; 1.0\big)$.
To maximize this objective function, you should fully utilize the budget to acquire impressions while satisfying the CPA constraint. If the CPA constraint is severely violated, consider lowering the bid multiplier.

\textbf{TASK:}
Based on the target advertiser's profile and historical bidding performance (if available), prescribe promising macro-level bidding strategies for the next auction period. Your task consists of the following steps:

1. First, thoroughly summarize and reflect on historical bidding performance to infer the potentially optimal strategy adjustment direction.

2. Then, based on your chosen adjustment direction, select the most appropriate enumerated strategy label for each of the following dimensions $\langle$\textit{strategy labels}$\rangle$: $\langle$\textit{option}$\rangle$

  \quad -- e.g., Risk Awareness: \texttt{AGGRESSIVE} / \texttt{MODERATE} / \texttt{CONSERVATIVE}

3. Finally, briefly summarize historical strategy performance and explain the rationale for your chosen adjustment direction, producing $\langle$\textit{strategy reasoning}$\rangle$: $\langle$\textit{description}$\rangle$.

Provide $\langle$\textit{M}$\rangle$ distinct strategies ranked by potential value in descending order; they must differ in at least one strategy dimension.

\textbf{NOTES:}

-- Each advertiser has unique budget, CPA, and advertising category, which implicitly reflect strategy preferences. Personalize strategies accordingly.

-- The BEST-STRATEGY ANCHOR records the best-performing strategy across all historical periods for this advertiser, but may not be the global optimum.

-- The SLIDING MEMORY WINDOW records the most recent \textit{N} periods of bidding experience, including strategy labels, strategy reasoning, performance statistics, and rollout trajectories, enabling intuitive comparison of performance across different strategies.

-- Strategy reasoning should be forward-looking and not exceed 300 characters.
\end{promptbox}

\begin{promptbox}[User Prompt]
\small
\textbf{ADVERTISER PROFILE:}

Budget: $\langle$\textit{budget}$\rangle$, \quad CPA: $\langle$\textit{CPA}$\rangle$, \quad Category: $\langle$\textit{category}$\rangle$

\textbf{BEST-STRATEGY ANCHOR:}

Strategy Labels: $\langle$\textit{strategy labels}$\rangle$

Strategy Reasoning: $\langle$\textit{strategy reasoning}$\rangle$

Trajectory Summary: $\langle$\textit{trajectory summary}$\rangle$

\textbf{SLIDING MEMORY WINDOW} (recent \textit{N} periods):

Period $\langle$\textit{period index}$\rangle$:

Strategy Labels: $\langle$\textit{strategy labels}$\rangle$

Strategy Reasoning: $\langle$\textit{strategy reasoning}$\rangle$

Performance Statistics: $\langle$\textit{performance statistics}$\rangle$

Rollout Trajectory: $\langle$\textit{rollout trajectory}$\rangle$

\quad -- Timestep \textit{t}: State: $\langle$\textit{state}$\rangle$, Action: $\langle$\textit{action}$\rangle$, Reward: $\langle$\textit{reward}$\rangle$, Cost: $\langle$\textit{cost}$\rangle$

\textbf{RESPONSE FORMAT} (JSON):

Return only a valid JSON array containing exactly
$\langle M\rangle$ strategy objects, ranked by potential value in
descending order:

\{

\quad "strategy\_labels": $\langle$\textit{strategy labels}$\rangle$,

\quad "reasoning": $\langle$\textit{strategy reasoning}$\rangle$

\}
\end{promptbox}

\end{document}